\numberwithin{equation}{section} 
\theoremstyle{plain} \newtheorem{remark}{Remark}[section]
\theoremstyle{plain} \newtheorem{definition}{Definition}[section]
\theoremstyle{plain} 
\theoremstyle{plain} \newtheorem{assumption}{Assumption}[section]
\theoremstyle{plain} 
\theoremstyle{plain} \newtheorem{theorem}{Theorem}[section]
\theoremstyle{plain} 
\theoremstyle{plain} 
\theoremstyle{plain} \newtheorem{corollary}{Corollary}[section]
\newcommand \E {\mathop{\mbox{\ensuremath{\mathbb{E}}}}\nolimits}
\renewcommand \Pr {\mathop{\mbox{\ensuremath{\mathbb{P}}}}\nolimits}
\newcommand{\set}[1]{\left\{\, #1 \,\right\} }
\newcommand{\cset}[2]{\left\{\, #1 ~\middle|~ #2 \,\right\} }
\newcommand\Reals {{\mathds{R}}}
\newcommand \CA {{\mathcal{A}}}
\newcommand \CH {{\mathcal{H}}}
\newcommand \CM {{\mathcal{M}}}
\newcommand \CW {{\mathcal{W}}}
\newcommand \CX {{\mathcal{X}}}
\newcommand \hM {\widehat{M}}
\newcommand \defn {\mathrel{\triangleq}}
\newcommand \argmax{\mathop{\rm arg\,max}}
\newcommand \norm[1]{\left\|#1\right\|}
\DeclareMathAlphabet{\mathpzc}{OT1}{pzc}{m}{it}
\newcommand \Uniform{\mathop{\mathpzc{Unif}}\nolimits}
\newcommand \bel {\xi}
\newcommand \pol {\pi}
\newcommand \mdp {\mu}
\newcommand \MDPs {\CM}
\newcommand \Pmp {\Pr_\mdp^\pol}
\newcommand\ind[1]{\mathop{\mbox{\ensuremath{\mathbb{I}}}}\left\{#1\right\}}
\newcommand\dd{\,\mathrm{d}}
\newcommand \seq[2]{#1^{#2}}
\newcommand \pseq[3]{#1_{#2}^{#3}}
\newcommand \sam[2]{#1^{(#2)}}
\newcommand \discount {\gamma}
\newcommand \threshold {\varepsilon}
\newcommand \ntrain {\ensuremath{N_{\mathrm{dat}}}}
\newcommand \nsamples {\ensuremath{N_{\mathrm{sam}}}}
\newcommand \nrollouts {\ensuremath{N_{\mathrm{rol}}}}
\newcommand \ntrajectories {\ensuremath{N_{\mathrm{trj}}}}
\newcommand \hist {h}
\newcommand \Urange {U_{\max}}
\newcommand \dist[2]{D\left(#1 ~\middle\|~ #2\right)}
\newcommand \marg {\phi}
\newcommand \Ae {A_\epsilon^\hist}
\icmltitlerunning{ABC Reinforcement Learning}
\begin{document} 

\twocolumn[
\icmltitle{ABC Reinforcement Learning}

\icmlauthor{Christos Dimitrakakis}{christos.dimitrakakis@gmail.com}
\icmladdress{EPFL, Lausanne, Switzerland}

\icmlauthor{Nikolaos Tziortziotis}{ntziorzi@gmail.com}
\icmladdress{University of Ioannina, Greece}

\vskip 0.3in
]

\begin{abstract} 
  We introduce a simple, general framework for \emph{likelihood-free} Bayesian reinforcement learning, through Approximate Bayesian Computation (ABC). The advantage is that we only require a prior distribution on a class of simulators. This is useful when a probabilistic model of the underlying process is too complex to formulate, but where detailed simulation models are available. ABC-RL allows  the use of any Bayesian reinforcement learning technique in this case. It can be seen as an extension of simulation methods to both planning and inference.
We experimentally demonstrate the potential of this approach in a comparison with LSPI. Finally, we introduce a theorem showing that ABC is sound.
\end{abstract} 

\section{Introduction}
\label{sec:introduction}
Bayesian reinforcement learning~\citep{strens2000bayesian,vlassis2012bayesian} is the decision-theoretic approach~\citep{Degroot:OptimalStatisticalDecisions}
to solving the reinforcement learning problem. However, apart from the
fact that calculating posterior distributions and the Bayes-optimal
decision is frequently intractable~\citep{NIPS2007:ross:bapomdp,duff2002olc}, another major difficulty is the specification of the prior and model
class. While there exist a number of non-parametric Bayesian model
classes which can be brought to bear for estimation of the dynamics of
an unknown process, it may not be a trivial matter to select the
correct class and prior. On the other hand, it is frequently known that the process can be approximated well by a complex parametrised simulator. The question is how to take advantage of this knowledge when the best simulator parameters are not known.

We propose a simple, general, reinforcement learning framework employing the principles of Approximate Bayesian Computation (ABC, see~\citep{csillery2010ABC} for an overview) for performing Bayesian inference using simulation. In doing so, we extend rollout algorithms for reinforcement learning, such as those described in~\citep{BertsekasTsitsiklis:NDP,Bertsekas:Rollout:2005,dimitrakakis+lagoudakis:mlj2008,lagoudakis2003rcpi}, to the case where we do not know what the correct model to draw rollouts from is.

We show how to use ABC to compute approximate posteriors over a set of environment models in the context of reinforcement learning. This includes a simple but general theoretical result on the quality of ABC posterior approximations. Finally, building on previous approaches to Bayesian reinforcement learning, we propose a strategy for selecting policies in this setting.

\subsection{The setting}
\label{sec:setting}
In the \emph{reinforcement learning problem}, an agent is acting in some unknown environment $\mdp$, according to some policy $\pol$. The agent's policy is a procedure for selecting a sequence of actions, with the action at time $t$ being $a_t \in \CA$. The environment reacts to this sequence with a corresponding sequence of observations $x_t \in \CX$ and rewards $r_t \in \Reals$. This interaction may depend on the complete history\footnote{A history may include multiple trajectories in episodic environments.} $h \in \CH$, where $\CH \defn (\CX \times \CA \times \Reals)^*$ is the set of all state action reward sequences, as neither the agent or the environment are necessarily finite-order Markov. For example, the agent may learn, or the environment may be partially observable. 

In this paper, we use a number of shorthands to simplify notation. Firstly,  we denote the (random) probability measure for the agent's action at time $t$ by:
\begin{equation}
  \label{eq:policy}
  \pol_t(A) \defn \Pr^\pol(a_t \in A \mid \seq{x}{t}, \seq{r}{t}, \seq{a}{t-1}),
\end{equation}
where $\seq{x}{t}$ is a shorthand for the sequence $(x_i)_{i=1}^t$;
similarly, we use $\pseq{x}{k}{t}$ for $(x_i)_{i=k}^t$. We denote the environment's response at time $t+1$ given the history at time $t$ by:
\begin{equation}
  \label{eq:mdp}
  \mdp_t(B) \defn \Pr_\mdp((x_{t+1}, r_{t+1}) \in B \mid \seq{x}{t}, \seq{r}{t}, \seq{a}{t}).
\end{equation}
In a further simplification, we shall also use $\pol_t(a_t)$ for the probability (or density) of the action actually taken by the policy at time $t$, and similarly, $\mdp_t(x_t)$ for the realised observation. Finally, we use $\Pmp$ to denote joint distributions on action, observation and reward sequences under the environment $\mdp$ and policy $\pol$.

The agent's goal is determined through its utility:
\begin{equation}
  \label{eq:utility}
  U \defn \sum_{t=1}^\infty \gamma^{t-1} r_t,
\end{equation}
which is a discounted sum of the total instantaneous rewards obtained, with $\gamma \in [0,1]$. Without loss of generality, we assume that $U \in [0, \Urange]$.  The optimal policy  maximises the expected utility $\E_{\mdp}^{\pol} U$.
As in the reinforcement learning problem the environment $\mdp$ is \emph{unknown}, this maximisation is ill-posed. Intuitively, we can increase the expected utility by either: 
\begin{inparaenum}[(i)]
\item Trying to better estimate $\mdp$ in order to perform the maximisation later (exploration), or
\item Use a best-guess estimate of $\mdp$ to obtain high rewards (exploitation).
\end{inparaenum}

In order to solve this trade-off, we can adopt a Bayesian viewpoint~\citep{Degroot:OptimalStatisticalDecisions,savage1972fs}, where we consider a (potentially infinite) set of environment models $\MDPs$. In particular, we select a \emph{prior probability} measure $\bel$ on $\MDPs$. For an appropriate subset $B \subset \MDPs$, the quantity $\bel(B)$ describes our initial belief that the correct model lies in $B$. We can now formulate the alternative goal of maximising the expected utility with respect to our prior:
\begin{equation}
  \label{eq:bayes-utility}
  \E_{\bel}^{\pol} U = \int_\MDPs (\E_{\mdp}^{\pol} U) \dd{\bel(\mdp)}.
\end{equation}
We can now formalise the problem as finding a policy $\pol_\bel^* \in \argmax_\pol \E_{\bel}^{\pol} U$. Any such policy is \emph{Bayes-optimal}, as it solves the exploration-exploitation problem with respect to our prior belief.

\subsection{Related work and our contribution}
\label{sec:related-work}
The first difficulty when adopting a Bayesian approach to sequential decision making is that finding the policy maximising \eqref{eq:bayes-utility} is  hard~\citep{duff2002olc} even in restricted classes of policies~\citep{dimitrakakis:mmbi:ewrl:2011}. On the other hand, simple heuristics such as Thompson sampling~\citep{thompson1933lou,strens2000bayesian} provide an efficient trade-off~\citep{agrawal:thompson,Kaufmann:Thompson} between exploration and exploitation. Alghough other heuristics exist~\citep{Kolter-Ng:NearBayesianExploration,DBLP:conf/ijcai/CastroP07,strens2000bayesian,araya2012near,poupart2006asd}, in this paper we focus on an approximate version of Thompson sampling for reasons of simplicity. 
The second difficulty is that in many interesting problems, the exact posterior calculation may be intractable, mainly due to partial observability~\citep{Poupart:ModelBayesBAPOMDP,NIPS2007:ross:bapomdp}. Interestingly, an ABC approach would not suffer from this problem for reasons that will be made clear in the sequel.

The most fundamental difficulty in a Bayesian framework is specifying a generative model class: it is not always clear what is the best model to use for an application. 
However, frequently we have access to a class of \emph{parametrised simulators} for the problem. Therefore, one reasonable approach is to find a good policy for a simulator in the class, and then apply it to the actual problem. Methods for finding good policies using simulation have been extensively studied before~\citep{BertsekasTsitsiklis:NDP,Bertsekas:Rollout:2005,Wu-Zilberstein-RolloutSampling-DecPOMDP:UAI-2010,gabillon:icml2011,dimitrakakis+lagoudakis:mlj2008}. However, in all those cases simulation was performed on a simulator with \emph{fixed} parameters. 

Approximate Bayesian Computation (ABC)~\citep[see][for an overview]{csillery2010ABC,marin2011ABC} is a general framework for likelihood-free Bayesian inference via simulation. It has been developed because of the existence of applications, such as econometric modelling~\citep[e.g.][]{geweke1999using}, where detailed simulators were available, but no useful analytical probabilistic models. While ABC methods have also been used for inference in dynamical systems~\citep[e.g][]{toni2009ABC}, they have not yet been applied to the reinforcement learning problem.

This paper proposes to perform Bayesian reinforcement learning through ABC on an arbitrary class of parametrised simulators. As ABC has been widely used in applications characterised by large amounts of data and complex simulations with many unknown parameters, it may also scale well in reinforcement learning applications. The proposed methodology is generally applicable to arbitrary  problems, including partially observable environments, continuous state spaces, and stochastic Markov games.

ABC Reinforcement Learning generalises methods previously developed for simulation-based approximation of optimal policies to the Bayesian case.
While in the standard framework covered by~\citet{Bertsekas:NLP}, a particular simulator of the environment is assumed to exist, via ABC we can relax this assumption. We only need a class of parametrised simulators that contain one close to the real environment dynamics. Thus, the only remaining difficulty is computational complexity.

Finally, we provide a simple but general bound for ABC posterior computation.
This bounds the KL divergence of the approximate posterior computed via ABC and the complete posterior distribution. As far as we know, this is a new and widely applicable result, although some other theoretical results using similar assumptions appear in~\citep{jasra2010filtering} and in~\citep{dean2011asymptotic} for hidden Markov models.

Section~\ref{sec:abc} introduces ABC inference for reinforcement learning, discusses its difference from standard Bayesian inference, and presents a theorem on the quality of the ABC approximation. Section~\ref{sec:abc-rl} describes the ABC-RL framework and the ABC-LSPI algorithm for continuous state spaces. An experimental illustration is given in Sec.~\ref{sec:experiments}, followed by a discussion in Sec.~\ref{sec:conclusion}.
The appendix contains the collected proofs.

\section{Approximate Bayesian Computation}
\label{sec:abc}
Approximate Bayesian Computation encompasses a
number of likelihood-free techniques where only an approximate
posterior is calculated via simulation. We first discuss how standard Bayesian inference in reinforcement learning differs from ABC inference. We then introduce a theorem on the quality of the ABC approximation.

\subsection{Bayesian inference for reinforcement learning}

Imagine that the history $h \in \CH$ has been generated from a process $\mdp \in \MDPs$ controlled with a history-dependent policy $\pol$, something which we denote as $h \sim \Pmp$. Now consider a prior $\bel$ on $\MDPs$ with the property that $\bel(\cdot \mid \pol) = \bel(\cdot)$, i.e. that the prior is independent of the policy used. Then the posterior probability, given a history $\hist$ generated by a policy $\pol$, that $\mdp \in B$ can be written as:
\footnote{For finite $\CM$, the posterior simplifies to $\bel(\mdp \mid h, \pol) = \Pmp (h) \bel(\mdp)/ \sum_{\mdp' \in \MDPs} \Pr_{\mdp'}^{\pol} (h) \bel(\mdp')$}
\begin{align}
  \bel(B \mid h, \pol) 
  &=
  \frac{\int_B \Pmp (h) \dd{\bel}(\mdp)}
  {\int_\MDPs \Pmp (h) \dd{\bel}(\mdp)}.
  \label{eq:posterior}
\end{align}
Fortunately, the dependence on the policy can be removed, since the posterior is the same for all policies that put non-zero mass on the observed data:
\begin{remark}
  Let $h \sim \Pmp$. Then $\forall \pol' \neq \pol$ such that $\Pr_{\mdp}^{\pol'}(h) > 0$,  $\bel(B \mid h, \pol ) = \bel(B \mid h, \pol')$.
  \label{rem:policy-dependence}
\end{remark}
Consequently, when calculating posteriors, the policy employed need not be considered, even when the process and policy depend on the complete history. In the ABC setting we do not have direct access to the probabilities $\mdp_t$, for the models $\mdp$ in our model class $\MDPs$. However, we can always generate observations from any model: $x_{t+1}  \sim \mdp_t$. This idea is used by ABC to calculate approximate posterior distributions.

\subsection{ABC inference for reinforcement learning}
\label{sec:abc-inference-rl}
The main idea of ABC is to approximate samples from the posterior distribution via simulation. We produce a sequence of sample models $\sam{\mdp}{k}$ from the prior $\bel$, and then generate data $\sam{\hist}{k}$ from each. If the generated data is ``sufficiently close'' to the history $\hist$, then the $k$-th model is accepted as a sample from the posterior $\bel(\mdp \mid \hist)$. More specifically, ABC requires that we define an approximately sufficient statistic $f : \CH \to \CW$ on some normed vector space $(\CW, \|\cdot\|)$. If $\|f(\hist) - f(\sam{\hist}{k})\| \leq \threshold$ then $\sam{\mdp}{k}$ is accepted as a sample from the posterior. Algorithm~\ref{alg:abc-post} gives the sampling method in detail for reinforcement learning. An important difference with the standard ABC posterior approximation, as well as exact inference, is the dependency on $\pol$.

Note that even though Remark~\ref{rem:policy-dependence} declares that the posterior is independent of the policy used, when using ABC this is no longer true. We must maintain the complete policy used until then to generate samples, otherwise there is no way to generate a sequence of observations.\footnote{For episodic problems, we must maintain the sequence of policies used.}
\begin{algorithm}[ht]
  \begin{algorithmic}
    \STATE \textbf{input} Prior $\bel$ on $\CM$, history  $h \in \CH$, threshold $\threshold$, statistic $f: \CH \to \CW$, policy $\pol$, maximum number of samples $\nsamples$, stopping condition $\tau$.
    \STATE $\hM = \emptyset$.
    \FOR {$k = 1, \ldots, \nsamples$}
    \STATE $\mdp^{(k)} \sim \bel$.
    \STATE $h^{(k)} \sim P_{\mdp^{(k)}}^\pol$
    \IF {$\norm{f(h) - f(h^{(k)})} < \threshold$}
    \STATE $\hM := \hM \cup \set{\mdp^{(k)}}$.
    \ENDIF
    \IF {$\tau$}
    \STATE \textbf{break}
    \ENDIF
    \ENDFOR
    \STATE \textbf{return} $\hM$
  \end{algorithmic}
  \caption{ABC-RL-Sample}
  \label{alg:abc-post}
\end{algorithm}
Intuitively, the algorithm can basically be seen as generating rollouts from a number of simulators, sampled from our prior distribution. The sampled set of simulators with a sufficient close statistic is then an approximate sample from our posterior distribution. The first question is what types of statistics we need. 

In fact, just as in standard ABC, if the statistic is sufficient, then the samples will be generated according to the posterior.
\begin{corollary}
  If $f$ is a sufficient statistic, then the set $\hM$ returned by Alg.~\ref{alg:abc-post} for $\epsilon = 0$ is a sample from the posterior.
  \label{cor:statistic}
\end{corollary}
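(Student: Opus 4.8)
The plan is to compute the law of a single model retained by Algorithm~\ref{alg:abc-post} and show that it coincides with the posterior \eqref{eq:posterior}. With $\threshold = 0$ the acceptance condition reduces to the exact-match event $E \defn \set{f(\sam{\hist}{k}) = f(\hist)}$, so the algorithm keeps a sampled pair $(\sam{\mdp}{k}, \sam{\hist}{k})$ --- generated by $\sam{\mdp}{k} \sim \bel$ followed by $\sam{\hist}{k} \sim \pseq{P}{\sam{\mdp}{k}}{\pol}$ --- exactly when $E$ holds. Hence each returned model is a draw from the conditional law $\bel(\sam{\mdp}{k} \in B \mid E)$, and it suffices to prove $\bel(\sam{\mdp}{k} \in B \mid E) = \bel(B \mid \hist, \pol)$ for every measurable $B \subset \MDPs$; the members of $\hM$ are then i.i.d.\ posterior samples.

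First I would write the conditioning explicitly. Put $s \defn f(\hist)$ and let $\Pmp(f = s)$ abbreviate the probability that a history generated by $\mdp$ under $\pol$ lands in the fibre $\set{\hist' : f(\hist') = s}$. Since the sampling is $\sam{\mdp}{k} \sim \bel$ and $\sam{\hist}{k} \mid \sam{\mdp}{k} \sim \pseq{P}{\sam{\mdp}{k}}{\pol}$, the definition of conditional probability gives
\begin{equation}
  \bel(\sam{\mdp}{k} \in B \mid E) = \frac{\int_B \Pmp(f = s) \dd\bel(\mdp)}{\int_\MDPs \Pmp(f = s) \dd\bel(\mdp)} .
\end{equation}
The crux is then to relate this to \eqref{eq:posterior} through sufficiency. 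By the factorisation theorem, $f$ being sufficient for the family $\set{\Pmp : \mdp \in \MDPs}$ at the fixed policy $\pol$ means the likelihood splits as $\Pmp(\hist) = g(f(\hist), \mdp)\, u(\hist)$, with $u$ free of $\mdp$. Summing (or integrating, with respect to the dominating measure on $\CH$) this identity over the fibre $\set{\hist' : f(\hist') = s}$ collects the $\mdp$-free factor into a constant $U(s)$ that is also free of $\mdp$, yielding $\Pmp(f = s) = g(s, \mdp)\, U(s)$.

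To finish, I would substitute both factorisations. In \eqref{eq:posterior} the common factor $u(\hist)$ cancels between numerator and denominator, while in the displayed ABC ratio the common factor $U(s)$ cancels; both expressions collapse to $\int_B g(s, \mdp) \dd\bel(\mdp) \big/ \int_\MDPs g(s, \mdp) \dd\bel(\mdp)$, so they are equal. The main obstacle is not this algebra but the measure-theoretic bookkeeping in the continuous, non-Markov setting: making precise the correct notion of sufficiency for the history-dependent, policy-conditioned family $\set{\Pmp}$, handling the fact that $E$ may carry zero probability (so that conditioning must be justified through densities with respect to a dominating measure rather than naive ratios), and assuming the denominator is strictly positive so that both the posterior and the ABC conditional are well defined.
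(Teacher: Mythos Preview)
Your argument is correct and follows the same overall skeleton as the paper's proof: write the law of an accepted model as a ratio of integrals of the acceptance probability $\Pmp(f=s)$ against the prior, then use sufficiency to show this ratio coincides with the posterior~\eqref{eq:posterior}. The difference lies in how sufficiency is invoked. The paper adopts, as its working definition, the stronger property that $f(h)=f(h')$ implies $\Pmp(h)=\Pmp(h')$ for every $\mdp,\pol$; under this, $\sum_{z:f(z)=f(\hist)}\Pmp(z)$ is simply the fibre size times $\Pmp(\hist)$, and the fibre size cancels directly. You instead use the standard Fisher--Neyman factorisation $\Pmp(\hist)=g(f(\hist),\mdp)\,u(\hist)$, which is weaker and more conventional: it allows the $\mdp$-free part $u$ to vary over the fibre, and you correctly absorb it into a constant $U(s)$ that cancels. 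Your route therefore proves a slightly more general statement than the paper's, at essentially no extra cost. Your closing caveats about the zero-probability of the exact-match event and the need for a dominating measure are well taken; the paper tacitly works in a discrete setting (note the sums over $\CH$) and does not address these points.
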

The (standard) proof is deferred to the appendix. Thus, for $\epsilon = 0$, when the statistic is sufficient, the sampling distribution and the posterior are identical. However, things are not so clear when $\epsilon > 0$. 

We now provide a simple theorem which characterises the relation of the approximate posterior to the true posterior, when we use a (not necessarily sufficient) statistic with threshold $\epsilon > 0$.
First, we remind the definition of the KL-divergence.
\begin{definition}
  The KL-divergence $D$ between two probability measures $\bel, \bel'$ on $\MDPs$ is
  \begin{equation}
    \label{eq:divergence}
    \dist{\bel}{\bel'} \defn \int_\MDPs \ln \frac{\dd\bel(\mdp)}{\dd\bel'(\mdp)} \dd\bel(\mdp).
  \end{equation}
  \label{def:divergence}
\end{definition}
In order to prove meaningful results, we need some additional assumptions on the likelihood function. In this particular case, we simply assume that it is smooth (Lipschitz) with respect to the statistical distance:
\begin{assumption}
  For a given policy $\pol$, for any $\mdp$, and histories $x, h \in \CH$,
  there exists $L > 0$ such that $\left|\ln \left[\Pmp (h)/ \Pmp(x)\right]\right| \leq L \|f(h) - f(x)\|$.
  \label{ass:lipschitz}
\end{assumption}
We note in passing that this assumption is related to the notion of differential privacy~\citep{dwork2009differential}, from which it was inspired.

We now can state the following theorem, whose proof can be found in the appendix, which generalises the previous corollary.
\begin{theorem}
  Under a policy $\pol$ and statistic $f$ satisfying Assumption~\ref{ass:lipschitz}, the approximate posterior distribution $\bel_\epsilon(\cdot \mid \hist)$ satisfies:
  \begin{equation}
    \label{eq:posterior-divergence}
    \dist{\bel(\cdot \mid \hist)}{\bel_\epsilon(\cdot \mid \hist)} \leq \ln |\Ae| + 2 L \epsilon,
  \end{equation}
  where $\Ae \defn \cset{z \in \CH}{\|f(z) - f(\hist)\| \leq \epsilon}$ is the $\epsilon$-ball around the observed history $\hist$ with respect to the statistical distance and $|\Ae|$ denotes its size.
  \label{the:posterior-divergence}
\end{theorem}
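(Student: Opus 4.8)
The plan is to write both the exact and the ABC posterior as densities with respect to the prior $\bel$, and then expand the divergence. Observe first that each model accepted in Alg.~\ref{alg:abc-post} is distributed as the prior conditioned on the acceptance event $\set{z \in \Ae}$; hence the ABC posterior $\bel_\epsilon(\cdot \mid \hist)$ has Radon--Nikodym derivative $\dd\bel_\epsilon(\mdp \mid \hist)/\dd\bel(\mdp) = \Pmp(\Ae)/Z_\epsilon$, where $\Pmp(\Ae)$ is the probability that $\mdp$ under $\pol$ generates a history landing in the $\epsilon$-ball and $Z_\epsilon \defn \int_\MDPs \Pmp(\Ae) \dd\bel(\mdp)$ is the corresponding normaliser. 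By \eqref{eq:posterior} the exact posterior has derivative $\Pmp(\hist)/Z$ with $Z \defn \int_\MDPs \Pmp(\hist) \dd\bel(\mdp)$.

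Next I would substitute these into Definition~\ref{def:divergence}. Taking the logarithm of the ratio of the two derivatives, the prior factors cancel, and integrating against the probability measure $\bel(\cdot \mid \hist)$ splits the divergence into a likelihood-ratio term and a constant normaliser term,
\begin{equation*}
  \dist{\bel(\cdot \mid \hist)}{\bel_\epsilon(\cdot \mid \hist)} = \int_\MDPs \ln \frac{\Pmp(\hist)}{\Pmp(\Ae)} \dd\bel(\mdp \mid \hist) + \ln \frac{Z_\epsilon}{Z}.
\end{equation*}

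The key step, which carries all the work, is to control the acceptance probability $\Pmp(\Ae)$ by the pointwise likelihood $\Pmp(\hist)$ via Assumption~\ref{ass:lipschitz}. For every $z \in \Ae$ we have $\|f(z) - f(\hist)\| \leq \epsilon$, so the assumption gives $e^{-L\epsilon} \Pmp(\hist) \leq \Pmp(z) \leq e^{L\epsilon} \Pmp(\hist)$, uniformly in $\mdp$ since the constant $L$ does not depend on $\mdp$. Integrating this sandwich over $\Ae$ with respect to the base measure that defines its size yields $e^{-L\epsilon} |\Ae| \Pmp(\hist) \leq \Pmp(\Ae) \leq e^{L\epsilon} |\Ae| \Pmp(\hist)$.

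Finally I would plug these two-sided bounds back in. Because $\hist \in \Ae$ its size satisfies $|\Ae| \geq 1$ (this is where the counting/discrete reading of $|\Ae|$ enters), so the lower bound simplifies to $\Pmp(\Ae) \geq e^{-L\epsilon}\Pmp(\hist)$; substituting into the likelihood-ratio term bounds it by $L\epsilon$ uniformly in $\mdp$, so after integrating against $\bel(\cdot \mid \hist)$ it contributes at most $L\epsilon$. The upper bound gives $Z_\epsilon \leq e^{L\epsilon} |\Ae| Z$, so the normaliser term contributes at most $\ln|\Ae| + L\epsilon$. Summing the two yields the claimed bound $\ln|\Ae| + 2L\epsilon$. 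The main obstacle is conceptual rather than computational: one must correctly identify the ABC sampling distribution with the prior conditioned on the $\epsilon$-ball event and be careful about which base measure defines $|\Ae|$, since it is precisely Assumption~\ref{ass:lipschitz}, together with $|\Ae| \geq 1$, that lets the acceptance probability be traded for the pointwise likelihood at the cost of the $\ln|\Ae|$ and $2L\epsilon$ terms. The exponential sandwich and the KL expansion are then routine.
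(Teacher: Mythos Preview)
Your proposal is correct and follows the same route as the paper: write both posteriors as densities against the prior, split the KL into a likelihood–ratio term $\int_\MDPs\ln[\Pmp(\hist)/\Pmp(\Ae)]\dd\bel(\mdp\mid\hist)$ and a normaliser term $\ln[\marg(\Ae)/\marg(\hist)]$ (your $Z_\epsilon/Z$), and control each via Assumption~\ref{ass:lipschitz}. The only difference is that you first establish the two–sided sandwich $e^{-L\epsilon}|\Ae|\,\Pmp(\hist)\leq\Pmp(\Ae)\leq e^{L\epsilon}|\Ae|\,\Pmp(\hist)$, whereas the paper bounds the first term via the cruder $\Pmp(\Ae)\geq\min_{z\in\Ae}\Pmp(z)$. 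Your sandwich is in fact slightly sharper: had you not invoked $|\Ae|\geq1$ and instead kept the first–term bound as $L\epsilon-\ln|\Ae|$, it would cancel against the $+\ln|\Ae|$ in the normaliser term, giving the tighter conclusion $\dist{\bel(\cdot\mid\hist)}{\bel_\epsilon(\cdot\mid\hist)}\leq 2L\epsilon$ with no $\ln|\Ae|$ term at all.
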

The divergence depends on the statistic in the following ways. Firstly, it approaches 0 as $\epsilon \to 0$. Secondly, it is smaller for smoother likelihoods. However, because of the dependence on the size of the $\epsilon$-ball\footnote{For discrete observations this is simply the counting measure of the ball. For more general cases it can be extended to an appropriate measure.} around the observed statistic, the statistic cannot be arbitrarily smooth. Nevertheless, it may be the case that a sufficient statistic is not required for good performance.
Since in reinforcement learning we are mainly interested in the utility rather than in system identification, we may be able to get good results by using utility-related statistics.

\paragraph{Observation-based statistics} A simple idea is to select features on which to calculate statistics. Discounted cumulative feature expectation are especially interesting, due to their connection with value functions~\citep[e.g.][Sec.~6.9.2]{Puterman:MDP:1994}. The main drawback is that this adds yet another hyper-parameter to tune.  In addition, unlike econometrics or bioinformatics, we may not be interested in model identification \emph{per se}, but only in finding a good policy.

\paragraph{Utility-based statistics} Quantities related to the utility may be a good match for reinforcement learning.  In the simplest case, it may be sufficient to only consider unconditional moments of the utility, which is the approach followed in this paper. However, these may only trivially satisfy Ass.~\ref{ass:lipschitz} for arbitrary policies. Nevertheless, as we shall see, even a very simple such statistic has a reasonably good performance.

\subsection{A Hoeffding-based utility statistic}
\label{sec:hoeffding-statistic}
 In particular,  given a history $h$  including $\ntrain$ trajectories in the environment, with the $i$-th trajectory obtaining utility $U^{(i)}$, we obtain a mean estimate $\hat{\E}^{\ntrain} U \defn \frac{1}{\ntrain} U^{(i)}$.
We then obtain a history $\hat{h}^{(k)}$ containing $\ntrajectories$ trajectories from the sampled environment $\mdp^{(k)}$ and construct the mean estimate $\hat{\E}^{\ntrajectories}_{k} U$.
In order to test whether these are close enough, we use the Hoeffding inequality~\citep{Hoeffding:SumInequalities}. In fact, it is easy to see that, with probability at least $1 - \delta$, $|\E^{\pol}_{\mdp} U - \E^{\pol}_{\mdp^{(k)}} U|$ is lower bounded by:
\begin{equation}
|\hat{\E}^\ntrain U - \hat{\E}^{\ntrajectories}_{k} U| -
\Urange \sqrt{\frac{\ln(2/\delta) (\ntrain + \ntrajectories)}{2 \ntrain \ntrajectories}},
\label{eq:statistic}
\end{equation}
where $\Urange$ is the range of the utility function.
We then use \eqref{eq:statistic} as the statistical distance $\norm{f(h) - 
f(h^{(k)})}$ between the observed history $h$ and the sampled history $h^{(k)}$.
The advantage of using this statistic is that the more data we have, it becomes harder to accept a sample.

This statistic has two parameters. Firstly, the error probability $\delta$, which does not need to be very small in practice, as the Hoeffding bound is only tight for high-variance distributions.  The second parameter is $\ntrajectories$. This does not need to be very large,
since it only makes a marginal difference in the bound when $\ntrajectories \gg \ntrain$. An illustration of the type of samples obtained with this statistic is given in Figure~\ref{fig:pendulum-error}, which shows the dependency of the approximate posterior distribution on the threshold $\epsilon$ when conditioned on a fixed amount $\ntrain$ of training trajectories.

\begin{figure*}[htb]
  \centering
  \subfigure[$\threshold = 1, \ntrain = 10^3$]{
    \begin{tikzpicture}[gnuplot]
\path (0.000,0.000) rectangle (5.333,4.000);
\gpcolor{color=gp lt color border}
\gpsetlinetype{gp lt border}
\gpsetlinewidth{0.50}
\draw[gp path] (1.012,0.616)--(1.263,0.616);
\draw[gp path] (4.780,0.616)--(4.529,0.616);
\gpcolor{rgb color={0.000,0.000,0.000}}
\node[gp node right,font={\fontsize{10pt}{12pt}\selectfont}] at (0.828,0.616) {0};
\gpcolor{color=gp lt color border}
\draw[gp path] (1.012,1.119)--(1.263,1.119);
\draw[gp path] (4.780,1.119)--(4.529,1.119);
\gpcolor{rgb color={0.000,0.000,0.000}}
\node[gp node right,font={\fontsize{10pt}{12pt}\selectfont}] at (0.828,1.119) {20};
\gpcolor{color=gp lt color border}
\draw[gp path] (1.012,1.621)--(1.263,1.621);
\draw[gp path] (4.780,1.621)--(4.529,1.621);
\gpcolor{rgb color={0.000,0.000,0.000}}
\node[gp node right,font={\fontsize{10pt}{12pt}\selectfont}] at (0.828,1.621) {40};
\gpcolor{color=gp lt color border}
\draw[gp path] (1.012,2.124)--(1.263,2.124);
\draw[gp path] (4.780,2.124)--(4.529,2.124);
\gpcolor{rgb color={0.000,0.000,0.000}}
\node[gp node right,font={\fontsize{10pt}{12pt}\selectfont}] at (0.828,2.124) {60};
\gpcolor{color=gp lt color border}
\draw[gp path] (1.012,2.626)--(1.263,2.626);
\draw[gp path] (4.780,2.626)--(4.529,2.626);
\gpcolor{rgb color={0.000,0.000,0.000}}
\node[gp node right,font={\fontsize{10pt}{12pt}\selectfont}] at (0.828,2.626) {80};
\gpcolor{color=gp lt color border}
\draw[gp path] (1.012,3.129)--(1.263,3.129);
\draw[gp path] (4.780,3.129)--(4.529,3.129);
\gpcolor{rgb color={0.000,0.000,0.000}}
\node[gp node right,font={\fontsize{10pt}{12pt}\selectfont}] at (0.828,3.129) {100};
\gpcolor{color=gp lt color border}
\draw[gp path] (1.012,3.631)--(1.263,3.631);
\draw[gp path] (4.780,3.631)--(4.529,3.631);
\gpcolor{rgb color={0.000,0.000,0.000}}
\node[gp node right,font={\fontsize{10pt}{12pt}\selectfont}] at (0.828,3.631) {120};
\gpcolor{color=gp lt color border}
\draw[gp path] (1.012,0.616)--(1.012,0.867);
\draw[gp path] (1.012,3.631)--(1.012,3.380);
\gpcolor{rgb color={0.000,0.000,0.000}}
\node[gp node center,font={\fontsize{10pt}{12pt}\selectfont}] at (1.012,0.308) {6};
\gpcolor{color=gp lt color border}
\draw[gp path] (1.640,0.616)--(1.640,0.867);
\draw[gp path] (1.640,3.631)--(1.640,3.380);
\gpcolor{rgb color={0.000,0.000,0.000}}
\node[gp node center,font={\fontsize{10pt}{12pt}\selectfont}] at (1.640,0.308) {6.5};
\gpcolor{color=gp lt color border}
\draw[gp path] (2.268,0.616)--(2.268,0.867);
\draw[gp path] (2.268,3.631)--(2.268,3.380);
\gpcolor{rgb color={0.000,0.000,0.000}}
\node[gp node center,font={\fontsize{10pt}{12pt}\selectfont}] at (2.268,0.308) {7};
\gpcolor{color=gp lt color border}
\draw[gp path] (2.896,0.616)--(2.896,0.867);
\draw[gp path] (2.896,3.631)--(2.896,3.380);
\gpcolor{rgb color={0.000,0.000,0.000}}
\node[gp node center,font={\fontsize{10pt}{12pt}\selectfont}] at (2.896,0.308) {7.5};
\gpcolor{color=gp lt color border}
\draw[gp path] (3.524,0.616)--(3.524,0.867);
\draw[gp path] (3.524,3.631)--(3.524,3.380);
\gpcolor{rgb color={0.000,0.000,0.000}}
\node[gp node center,font={\fontsize{10pt}{12pt}\selectfont}] at (3.524,0.308) {8};
\gpcolor{color=gp lt color border}
\draw[gp path] (4.152,0.616)--(4.152,0.867);
\draw[gp path] (4.152,3.631)--(4.152,3.380);
\gpcolor{rgb color={0.000,0.000,0.000}}
\node[gp node center,font={\fontsize{10pt}{12pt}\selectfont}] at (4.152,0.308) {8.5};
\gpcolor{color=gp lt color border}
\draw[gp path] (4.780,0.616)--(4.780,0.867);
\draw[gp path] (4.780,3.631)--(4.780,3.380);
\gpcolor{rgb color={0.000,0.000,0.000}}
\node[gp node center,font={\fontsize{10pt}{12pt}\selectfont}] at (4.780,0.308) {9};
\gpcolor{color=gp lt color border}
\draw[gp path] (1.012,3.631)--(1.012,0.616)--(4.780,0.616)--(4.780,3.631)--cycle;
\gpcolor{rgb color={0.000,0.000,1.000}}
\gpsetlinetype{gp lt plot 0}
\gpsetlinewidth{3.00}
\draw[gp path] (1.307,0.616)--(1.355,0.616)--(1.403,0.616)--(1.451,0.616)--(1.499,2.576)%
  --(1.548,2.526)--(1.596,2.978)--(1.644,2.802)--(1.692,3.003)--(1.740,2.827)--(1.788,3.204)%
  --(1.837,2.752)--(1.885,2.701)--(1.933,2.953)--(1.981,3.129)--(2.029,3.053)--(2.077,2.601)%
  --(2.126,2.777)--(2.174,2.701)--(2.222,2.928)--(2.270,2.526)--(2.318,3.103)--(2.366,2.928)%
  --(2.415,3.078)--(2.463,3.330)--(2.511,2.676)--(2.559,2.701)--(2.607,3.129)--(2.655,2.978)%
  --(2.704,2.777)--(2.752,2.953)--(2.800,2.877)--(2.848,2.551)--(2.896,2.601)--(2.944,2.576)%
  --(2.993,2.576)--(3.041,2.727)--(3.089,2.475)--(3.137,2.676)--(3.185,2.576)--(3.233,2.299)%
  --(3.281,2.124)--(3.330,2.651)--(3.378,2.475)--(3.426,2.249)--(3.474,2.048)--(3.522,2.274)%
  --(3.570,2.124)--(3.619,2.098)--(3.667,1.747)--(3.715,2.375)--(3.763,1.772)--(3.811,1.948)%
  --(3.859,2.048)--(3.908,0.892)--(3.956,0.616)--(4.004,0.616)--(4.052,0.616)--(4.100,0.616)%
  --(4.148,0.616)--(4.197,0.616)--(4.245,0.616)--(4.293,0.616);
\gpcolor{rgb color={0.000,0.502,0.000}}
\gpsetlinetype{gp lt plot 5}
\draw[gp path] (1.307,0.641)--(1.355,0.717)--(1.403,1.144)--(1.451,1.621)--(1.499,1.722)%
  --(1.548,2.576)--(1.596,2.928)--(1.644,2.576)--(1.692,2.727)--(1.740,2.701)--(1.788,2.752)%
  --(1.837,2.651)--(1.885,3.129)--(1.933,2.852)--(1.981,2.978)--(2.029,2.802)--(2.077,3.078)%
  --(2.126,2.626)--(2.174,2.500)--(2.222,3.129)--(2.270,2.701)--(2.318,3.154)--(2.366,2.928)%
  --(2.415,2.877)--(2.463,3.455)--(2.511,3.279)--(2.559,2.902)--(2.607,2.601)--(2.655,2.727)%
  --(2.704,2.701)--(2.752,2.727)--(2.800,2.978)--(2.848,2.576)--(2.896,2.626)--(2.944,2.676)%
  --(2.993,2.701)--(3.041,2.626)--(3.089,2.551)--(3.137,2.526)--(3.185,2.475)--(3.233,2.249)%
  --(3.281,2.676)--(3.330,2.500)--(3.378,2.475)--(3.426,2.048)--(3.474,2.500)--(3.522,1.998)%
  --(3.570,2.199)--(3.619,2.073)--(3.667,1.923)--(3.715,1.546)--(3.763,1.671)--(3.811,1.722)%
  --(3.859,1.420)--(3.908,1.043)--(3.956,1.144)--(4.004,0.842)--(4.052,0.817)--(4.100,0.742)%
  --(4.148,0.691)--(4.197,0.666)--(4.245,0.616)--(4.293,0.641);
\gpcolor{rgb color={1.000,0.000,0.000}}
\gpsetlinetype{gp lt plot 2}
\draw[gp path] (2.797,0.616)--(2.797,3.455);
\gpsetpointsize{8.00}
\gppoint{gp mark 8}{(2.797,0.616)}
\gppoint{gp mark 8}{(2.797,3.455)}
\gpcolor{rgb color={0.000,0.749,0.749}}
\gppoint{gp mark 2}{(2.589,2.036)}
\gpcolor{rgb color={0.749,0.000,0.749}}
\gppoint{gp mark 2}{(2.589,2.036)}
\gpdefrectangularnode{gp plot 1}{\pgfpoint{1.012cm}{0.616cm}}{\pgfpoint{4.780cm}{3.631cm}}
\end{tikzpicture}
  }
  \subfigure[$\threshold = 0.1, \ntrain=10^3$]{
    \begin{tikzpicture}[gnuplot]
\path (0.000,0.000) rectangle (5.333,4.000);
\gpcolor{color=gp lt color border}
\gpsetlinetype{gp lt border}
\gpsetlinewidth{0.50}
\draw[gp path] (1.012,0.616)--(1.263,0.616);
\draw[gp path] (4.780,0.616)--(4.529,0.616);
\gpcolor{rgb color={0.000,0.000,0.000}}
\node[gp node right,font={\fontsize{10pt}{12pt}\selectfont}] at (0.828,0.616) {0};
\gpcolor{color=gp lt color border}
\draw[gp path] (1.012,1.219)--(1.263,1.219);
\draw[gp path] (4.780,1.219)--(4.529,1.219);
\gpcolor{rgb color={0.000,0.000,0.000}}
\node[gp node right,font={\fontsize{10pt}{12pt}\selectfont}] at (0.828,1.219) {20};
\gpcolor{color=gp lt color border}
\draw[gp path] (1.012,1.822)--(1.263,1.822);
\draw[gp path] (4.780,1.822)--(4.529,1.822);
\gpcolor{rgb color={0.000,0.000,0.000}}
\node[gp node right,font={\fontsize{10pt}{12pt}\selectfont}] at (0.828,1.822) {40};
\gpcolor{color=gp lt color border}
\draw[gp path] (1.012,2.425)--(1.263,2.425);
\draw[gp path] (4.780,2.425)--(4.529,2.425);
\gpcolor{rgb color={0.000,0.000,0.000}}
\node[gp node right,font={\fontsize{10pt}{12pt}\selectfont}] at (0.828,2.425) {60};
\gpcolor{color=gp lt color border}
\draw[gp path] (1.012,3.028)--(1.263,3.028);
\draw[gp path] (4.780,3.028)--(4.529,3.028);
\gpcolor{rgb color={0.000,0.000,0.000}}
\node[gp node right,font={\fontsize{10pt}{12pt}\selectfont}] at (0.828,3.028) {80};
\gpcolor{color=gp lt color border}
\draw[gp path] (1.012,3.631)--(1.263,3.631);
\draw[gp path] (4.780,3.631)--(4.529,3.631);
\gpcolor{rgb color={0.000,0.000,0.000}}
\node[gp node right,font={\fontsize{10pt}{12pt}\selectfont}] at (0.828,3.631) {100};
\gpcolor{color=gp lt color border}
\draw[gp path] (1.012,0.616)--(1.012,0.867);
\draw[gp path] (1.012,3.631)--(1.012,3.380);
\gpcolor{rgb color={0.000,0.000,0.000}}
\node[gp node center,font={\fontsize{10pt}{12pt}\selectfont}] at (1.012,0.308) {6};
\gpcolor{color=gp lt color border}
\draw[gp path] (1.640,0.616)--(1.640,0.867);
\draw[gp path] (1.640,3.631)--(1.640,3.380);
\gpcolor{rgb color={0.000,0.000,0.000}}
\node[gp node center,font={\fontsize{10pt}{12pt}\selectfont}] at (1.640,0.308) {6.5};
\gpcolor{color=gp lt color border}
\draw[gp path] (2.268,0.616)--(2.268,0.867);
\draw[gp path] (2.268,3.631)--(2.268,3.380);
\gpcolor{rgb color={0.000,0.000,0.000}}
\node[gp node center,font={\fontsize{10pt}{12pt}\selectfont}] at (2.268,0.308) {7};
\gpcolor{color=gp lt color border}
\draw[gp path] (2.896,0.616)--(2.896,0.867);
\draw[gp path] (2.896,3.631)--(2.896,3.380);
\gpcolor{rgb color={0.000,0.000,0.000}}
\node[gp node center,font={\fontsize{10pt}{12pt}\selectfont}] at (2.896,0.308) {7.5};
\gpcolor{color=gp lt color border}
\draw[gp path] (3.524,0.616)--(3.524,0.867);
\draw[gp path] (3.524,3.631)--(3.524,3.380);
\gpcolor{rgb color={0.000,0.000,0.000}}
\node[gp node center,font={\fontsize{10pt}{12pt}\selectfont}] at (3.524,0.308) {8};
\gpcolor{color=gp lt color border}
\draw[gp path] (4.152,0.616)--(4.152,0.867);
\draw[gp path] (4.152,3.631)--(4.152,3.380);
\gpcolor{rgb color={0.000,0.000,0.000}}
\node[gp node center,font={\fontsize{10pt}{12pt}\selectfont}] at (4.152,0.308) {8.5};
\gpcolor{color=gp lt color border}
\draw[gp path] (4.780,0.616)--(4.780,0.867);
\draw[gp path] (4.780,3.631)--(4.780,3.380);
\gpcolor{rgb color={0.000,0.000,0.000}}
\node[gp node center,font={\fontsize{10pt}{12pt}\selectfont}] at (4.780,0.308) {9};
\gpcolor{color=gp lt color border}
\draw[gp path] (1.012,3.631)--(1.012,0.616)--(4.780,0.616)--(4.780,3.631)--cycle;
\gpcolor{rgb color={0.000,0.000,1.000}}
\gpsetlinetype{gp lt plot 0}
\gpsetlinewidth{3.00}
\draw[gp path] (2.358,0.616)--(2.398,0.616)--(2.439,0.616)--(2.480,0.616)--(2.521,0.616)%
  --(2.562,0.616)--(2.603,2.033)--(2.644,3.088)--(2.685,2.938)--(2.726,2.696)--(2.766,1.792)%
  --(2.807,0.616)--(2.848,0.616)--(2.889,0.616)--(2.930,0.616)--(2.971,0.616)--(3.012,0.616);
\gpcolor{rgb color={0.000,0.502,0.000}}
\gpsetlinetype{gp lt plot 5}
\draw[gp path] (2.358,0.767)--(2.398,0.767)--(2.439,0.827)--(2.480,0.948)--(2.521,0.918)%
  --(2.562,1.490)--(2.603,1.581)--(2.644,1.611)--(2.685,1.792)--(2.726,1.551)--(2.766,1.882)%
  --(2.807,1.460)--(2.848,1.279)--(2.889,0.887)--(2.930,0.797)--(2.971,0.737)--(3.012,0.646);
\gpcolor{rgb color={1.000,0.000,0.000}}
\gpsetlinetype{gp lt plot 2}
\draw[gp path] (2.797,0.616)--(2.797,3.088);
\gpsetpointsize{8.00}
\gppoint{gp mark 8}{(2.797,0.616)}
\gppoint{gp mark 8}{(2.797,3.088)}
\gpcolor{rgb color={0.000,0.749,0.749}}
\gppoint{gp mark 2}{(2.687,1.852)}
\gpcolor{rgb color={0.749,0.000,0.749}}
\gppoint{gp mark 2}{(2.687,1.852)}
\gpdefrectangularnode{gp plot 1}{\pgfpoint{1.012cm}{0.616cm}}{\pgfpoint{4.780cm}{3.631cm}}
\end{tikzpicture}
  }
  \caption{\textbf{Pendulum value distribution}. In both cases, $\nsamples = 10^4$ model samples are drawn from the prior and $\nrollouts = 10^3$ rollouts are performed for each model sample.  The vertical dashed line shows the actual value of the policy. The solid and dot-dashed lines show the histograms of real and estimated values of the original policy in the sampled environment.  The solid line shows the value estimated using $10^4$ rollouts. The dot-dashed line shows the value estimated in the run itself, with $\ntrajectories$ rollouts per sample. The $\times$ shows the expected value, averaged over the accepted samples. It can be seen that, while a smaller threshold can result in better accuracy, many fewer samples are accepted. }
  \label{fig:pendulum-error}
\end{figure*}

\section{ABC reinforcement learning}
\label{sec:abc-rl}
We now present a simple algorithm for ABC reinforcement learning, based on the ideas explained in the previous section. For any given set of observations and policies, we draw a number of sample environments from the prior distribution. For each environment, we execute the relevant policy and calculate the appropriate statistics. If these are close enough to the observed statistic, the sample is accepted. The next step is to find a good policy for the sampled simulator. As we can draw an arbitrary number of rollouts in the simulator, any type of approximate dynamic programming algorithm can be used. In our experiments, we used LSPI~\citep{lagoudakis2003least}, which is simple to program and effective. The hope is that if the approximate posterior sampling is reasonable, then we can take advantage of our prior knowledge of the environment class, to learn a good policy with less data, at the expense of additional computation.

\begin{algorithm}[ht]
  \begin{algorithmic}
    \STATE \textbf{parameters}  $\MDPs$, $\bel$, $h$, $\pol$, $f$
    \STATE $\tau = \{|\widehat{M}|=1\}$
    \STATE $\hat{\mdp} = \texttt{ABC-RL-Sample}(\MDPs, \bel, h, \pol, f, \tau)$
    \STATE \textbf{return} $\hat{\pol} \approx \argmax_\pol \E_{\hat{\mdp}}^{\pol} U$
  \end{algorithmic}
  \caption{ABC-RL}
  \label{alg:abc-rl-policy}
\end{algorithm}
A sketch of the algorithm is shown in Alg.\ref{alg:abc-rl-policy}. 
This has a number of additional parameters that need to be discussed. The most important is the stopping condition $\tau$. The simplest idea, which we use in this paper, is to stop when a single model $\hat{\mdp}$ has been generated by $\texttt{ABC-RL-Sample}$.

Then an (approximate) optimal policy for the sampled model $\hat{\mdp}$ can be found via an exact (or approximate)  dynamic programming algorithm.  This simplifies the optimisation step significantly, as otherwise it would be necessary to optimise over multiple models. This particular version of the algorithm can be seen as an ABC variant of Thompson sampling~\citep{thompson1933lou,strens2000bayesian}.

The exact algorithm to use for the policy optimisation depends largely upon the class of simulators we have. In principle any type of environment can be handled, as long as a simulation-based approximation method can be used to discover a good policy. In \emph{extremis}, direct policy search may be used.
However, in the work presented in this paper, we limit ourselves to continuous-state Markov decision processes, for which numerous efficient ADP algorithms exist. 

\subsection{ABC-LSPI}
\label{sec:abc-lspi}
Let us consider the class of continuous-state, discrete-action
Markov decision processes (MDPs). Then, a number of sample-based ADP
algorithms can be used to find good policies, such as 
fitted Q-iteration (FQI)~\citep{Ernst:TreeRL} and least-square
policy iteration (LSPI)~\citep{lagoudakis2003least}, which we use herein.

Since we take an arbitrary number of trajectories from the sampled
MDP, an important algorithmic parameter is the number of rollouts
$\nrollouts$ to draw. Higher values lead to better approximations,
at the expense of additional computation. Finally, since LSPI uses a linear value function\footnote{The value function $V(s)$ is simply the expected utility conditioned on the system state $s$. We omit details as this is not  necessary to understand the framework proposed.} approximation, it is necessary to select an appropriate basis  for the fit to be good.

The computational complexity of ABC-LSPI depends on the quality of approximation we wish to achieve and on the number of samples required to sample a model with statistics $\threshold$-close to those of the data. To reduce computation, if $\nsamples$ models have been generated without one being accepted, we double $\epsilon$ and call \texttt{ABC-RL-Sample} again.
\section{Experiments}
\label{sec:experiments}
We performed some experiments to investigate the viability of ABC-RL, with all algorithms implemented using~\cite{beliefbox}. In these, we compared  ABC-LSPI to LSPI. The intuition is that, if ABC can find a good simulator, then we can perform a much better estimation of the value function by drawing a large number of samples from the simulator, rather than estimating the value function directly from the observations.

\subsection{Domains}
\label{sec:domains}
We consider two domains to illustrate ABC-RL. In both of these domains, we have access to a set of parametrised simulators $\MDPs = \cset{\mdp_\theta}{\theta \in \Theta}$ for the domains. However, we do not know the true parameters $\theta^* \in \Theta$ of the domains. 
For ABC, sampled parameters $\theta^{(k)}$ are drawn from a uniform distribution $\Uniform(\Theta)$, with $\Theta = \cset{\theta \in \Reals^n}{\theta_i \in [\frac{1}{2}\theta_i^*, \frac{3}{2}\theta_i^*]}$.

\paragraph{Mountain car} This is a generalised version of the mountain car domain described in~\citet{Sutton+Barto:1998}. The goal is to bring a car to the top of a hill. The problem has 7 parameters: upper and lower bounds on the horizontal position of the car, upper and lower bounds on the car's velocity, maximum acceleration, gravity, and finally the amount of uniform noise present.
The real environment parameters are $\theta^* = (0.5, -1.2, 0.07, -0.07, 0.001, 0.0025, 0.2)$.
 In this problem, the goal is to reach the right-most horizontal position.
The observation consists of the horizontal position and velocity and the reward is $-1$ at every step until the goal is reached.

\paragraph{Pendulum} This is a generalised version of the pendulum domain~\citep{Sutton+Barto:1998}, but without boundaries. The goal of the agent in this environment is to maintain a pendulum upright, using a controller that can switch actions every $0.1s$. The problem has 6 parameters: the pendulum mass, the cart mass, the pendulum length, the gravity, the amount of uniform noise, and the simulation time interval. In this environment, the reward is $+1$ for every step where the pendulum is balanced.
The actual environment parameters are $\theta^* = (2.0, 8.0, 0.5, 9.8, 0.01, 0.01)$.

\subsection{Results}  
\label{sec:results}
\begin{figure}[htb]
  \centering
  \subfigure[Mountain Car]{
    \begin{tikzpicture}[gnuplot]
\path (0.000,0.000) rectangle (8.000,6.000);
\gpfill{rgb color={1.000,1.000,1.000}} (1.320,0.985)--(7.447,0.985)--(7.447,5.631)--(1.320,5.631)--cycle;
\gpcolor{color=gp lt color border}
\gpsetlinetype{gp lt border}
\gpsetlinewidth{1.00}
\draw[gp path] (1.320,0.985)--(1.320,5.631)--(7.447,5.631)--(7.447,0.985)--cycle;
\gpsetlinewidth{0.50}
\draw[gp path] (1.320,0.985)--(1.571,0.985);
\draw[gp path] (7.447,0.985)--(7.196,0.985);
\gpcolor{rgb color={0.000,0.000,0.000}}
\node[gp node right,font={\fontsize{10pt}{12pt}\selectfont}] at (1.136,0.985) {-80};
\gpcolor{color=gp lt color border}
\draw[gp path] (1.320,1.914)--(1.571,1.914);
\draw[gp path] (7.447,1.914)--(7.196,1.914);
\gpcolor{rgb color={0.000,0.000,0.000}}
\node[gp node right,font={\fontsize{10pt}{12pt}\selectfont}] at (1.136,1.914) {-70};
\gpcolor{color=gp lt color border}
\draw[gp path] (1.320,2.843)--(1.571,2.843);
\draw[gp path] (7.447,2.843)--(7.196,2.843);
\gpcolor{rgb color={0.000,0.000,0.000}}
\node[gp node right,font={\fontsize{10pt}{12pt}\selectfont}] at (1.136,2.843) {-60};
\gpcolor{color=gp lt color border}
\draw[gp path] (1.320,3.773)--(1.571,3.773);
\draw[gp path] (7.447,3.773)--(7.196,3.773);
\gpcolor{rgb color={0.000,0.000,0.000}}
\node[gp node right,font={\fontsize{10pt}{12pt}\selectfont}] at (1.136,3.773) {-50};
\gpcolor{color=gp lt color border}
\draw[gp path] (1.320,4.702)--(1.571,4.702);
\draw[gp path] (7.447,4.702)--(7.196,4.702);
\gpcolor{rgb color={0.000,0.000,0.000}}
\node[gp node right,font={\fontsize{10pt}{12pt}\selectfont}] at (1.136,4.702) {-40};
\gpcolor{color=gp lt color border}
\draw[gp path] (1.320,5.631)--(1.571,5.631);
\draw[gp path] (7.447,5.631)--(7.196,5.631);
\gpcolor{rgb color={0.000,0.000,0.000}}
\node[gp node right,font={\fontsize{10pt}{12pt}\selectfont}] at (1.136,5.631) {-30};
\gpcolor{color=gp lt color border}
\draw[gp path] (1.320,0.985)--(1.320,1.236);
\draw[gp path] (1.320,5.631)--(1.320,5.380);
\gpcolor{rgb color={0.000,0.000,0.000}}
\node[gp node center,font={\fontsize{10pt}{12pt}\selectfont}] at (1.320,0.677) {$10^{0}$};
\gpcolor{color=gp lt color border}
\draw[gp path] (3.362,0.985)--(3.362,1.236);
\draw[gp path] (3.362,5.631)--(3.362,5.380);
\gpcolor{rgb color={0.000,0.000,0.000}}
\node[gp node center,font={\fontsize{10pt}{12pt}\selectfont}] at (3.362,0.677) {$10^{1}$};
\gpcolor{color=gp lt color border}
\draw[gp path] (5.405,0.985)--(5.405,1.236);
\draw[gp path] (5.405,5.631)--(5.405,5.380);
\gpcolor{rgb color={0.000,0.000,0.000}}
\node[gp node center,font={\fontsize{10pt}{12pt}\selectfont}] at (5.405,0.677) {$10^{2}$};
\gpcolor{color=gp lt color border}
\draw[gp path] (7.447,0.985)--(7.447,1.236);
\draw[gp path] (7.447,5.631)--(7.447,5.380);
\gpcolor{rgb color={0.000,0.000,0.000}}
\node[gp node center,font={\fontsize{10pt}{12pt}\selectfont}] at (7.447,0.677) {$10^{3}$};
\gpcolor{color=gp lt color border}
\draw[gp path] (1.320,5.631)--(1.320,0.985)--(7.447,0.985)--(7.447,5.631)--cycle;
\gpcolor{rgb color={0.000,0.000,0.000}}
\node[gp node center,rotate=90,font={\fontsize{10pt}{12pt}\selectfont}] at (0.246,3.308) {value};
\node[gp node center,font={\fontsize{10pt}{12pt}\selectfont}] at (4.383,0.215) {trajectories};
\gpfill{rgb color={1.000,0.000,0.000},opacity=0.10} (1.320,2.194)--(1.320,2.194)--(1.935,2.777)--(2.748,3.471)%
    --(3.362,3.391)--(3.977,4.031)--(4.790,4.030)--(5.405,4.023)--(6.019,4.406)%
    --(6.832,4.395)--(7.447,4.331)--(7.447,3.838)--(6.832,3.953)--(6.019,3.973)%
    --(5.405,3.461)--(4.790,3.488)--(3.977,3.494)--(3.362,2.776)--(2.748,2.875)%
    --(1.935,2.104)--(1.320,1.610)--cycle;
\gpcolor{rgb color={1.000,0.000,0.000}}
\gpsetlinetype{gp lt plot 0}
\gpsetlinewidth{1.00}
\draw[gp path] (1.320,2.194)--(1.935,2.777)--(2.748,3.471)--(3.362,3.391)--(3.977,4.031)%
  --(4.790,4.030)--(5.405,4.023)--(6.019,4.406)--(6.832,4.395)--(7.447,4.331)--(7.447,3.838)%
  --(6.832,3.953)--(6.019,3.973)--(5.405,3.461)--(4.790,3.488)--(3.977,3.494)--(3.362,2.776)%
  --(2.748,2.875)--(1.935,2.104)--(1.320,1.610);
\gpsetlinewidth{0.50}
\draw[gp path] (1.320,2.194)--(1.935,2.777)--(2.748,3.471)--(3.362,3.391)--(3.977,4.031)%
  --(4.790,4.030)--(5.405,4.023)--(6.019,4.406)--(6.832,4.395)--(7.447,4.331)--(7.447,3.838)%
  --(6.832,3.953)--(6.019,3.973)--(5.405,3.461)--(4.790,3.488)--(3.977,3.494)--(3.362,2.776)%
  --(2.748,2.875)--(1.935,2.104)--(1.320,1.610)--cycle;
\gpfill{rgb color={0.000,0.000,1.000},opacity=0.10} (1.320,1.690)--(1.320,1.690)--(1.935,1.468)--(2.748,2.177)%
    --(3.362,2.606)--(3.977,3.207)--(4.790,4.371)--(5.405,4.670)--(6.019,4.795)%
    --(6.832,4.876)--(7.447,4.892)--(7.447,4.860)--(6.832,4.843)--(6.019,4.728)%
    --(5.405,4.532)--(4.790,4.065)--(3.977,2.580)--(3.362,1.956)--(2.748,1.590)%
    --(1.935,1.007)--(1.320,1.141)--cycle;
\gpcolor{rgb color={0.000,0.000,1.000}}
\gpsetlinetype{gp lt plot 2}
\gpsetlinewidth{1.00}
\draw[gp path] (1.320,1.690)--(1.935,1.468)--(2.748,2.177)--(3.362,2.606)--(3.977,3.207)%
  --(4.790,4.371)--(5.405,4.670)--(6.019,4.795)--(6.832,4.876)--(7.447,4.892)--(7.447,4.860)%
  --(6.832,4.843)--(6.019,4.728)--(5.405,4.532)--(4.790,4.065)--(3.977,2.580)--(3.362,1.956)%
  --(2.748,1.590)--(1.935,1.007)--(1.320,1.141);
\gpsetlinetype{gp lt plot 0}
\gpsetlinewidth{0.50}
\draw[gp path] (1.320,1.690)--(1.935,1.468)--(2.748,2.177)--(3.362,2.606)--(3.977,3.207)%
  --(4.790,4.371)--(5.405,4.670)--(6.019,4.795)--(6.832,4.876)--(7.447,4.892)--(7.447,4.860)%
  --(6.832,4.843)--(6.019,4.728)--(5.405,4.532)--(4.790,4.065)--(3.977,2.580)--(3.362,1.956)%
  --(2.748,1.590)--(1.935,1.007)--(1.320,1.141)--cycle;
\gpcolor{color=gp lt color border}
\node[gp node right,font={\fontsize{10pt}{12pt}\selectfont}] at (2.240,5.297) {ABC};
\gpcolor{rgb color={1.000,0.000,0.000}}
\gpsetlinewidth{3.00}
\draw[gp path] (2.424,5.297)--(3.340,5.297);
\draw[gp path] (1.320,1.884)--(1.935,2.441)--(2.748,3.185)--(3.362,3.100)--(3.977,3.767)%
  --(4.790,3.785)--(5.405,3.733)--(6.019,4.205)--(6.832,4.191)--(7.447,4.088);
\gpsetpointsize{8.00}
\gppoint{gp mark 6}{(1.320,1.884)}
\gppoint{gp mark 6}{(1.935,2.441)}
\gppoint{gp mark 6}{(2.748,3.185)}
\gppoint{gp mark 6}{(3.362,3.100)}
\gppoint{gp mark 6}{(3.977,3.767)}
\gppoint{gp mark 6}{(4.790,3.785)}
\gppoint{gp mark 6}{(5.405,3.733)}
\gppoint{gp mark 6}{(6.019,4.205)}
\gppoint{gp mark 6}{(6.832,4.191)}
\gppoint{gp mark 6}{(7.447,4.088)}
\gppoint{gp mark 6}{(2.882,5.297)}
\gpcolor{color=gp lt color border}
\node[gp node right,font={\fontsize{10pt}{12pt}\selectfont}] at (2.240,4.989) {LSPI};
\gpcolor{rgb color={0.000,0.000,1.000}}
\gpsetlinetype{gp lt plot 5}
\draw[gp path] (2.424,4.989)--(3.340,4.989);
\draw[gp path] (1.320,1.406)--(1.935,1.239)--(2.748,1.880)--(3.362,2.279)--(3.977,2.900)%
  --(4.790,4.235)--(5.405,4.606)--(6.019,4.762)--(6.832,4.860)--(7.447,4.875);
\gppoint{gp mark 2}{(1.320,1.406)}
\gppoint{gp mark 2}{(1.935,1.239)}
\gppoint{gp mark 2}{(2.748,1.880)}
\gppoint{gp mark 2}{(3.362,2.279)}
\gppoint{gp mark 2}{(3.977,2.900)}
\gppoint{gp mark 2}{(4.790,4.235)}
\gppoint{gp mark 2}{(5.405,4.606)}
\gppoint{gp mark 2}{(6.019,4.762)}
\gppoint{gp mark 2}{(6.832,4.860)}
\gppoint{gp mark 2}{(7.447,4.875)}
\gppoint{gp mark 2}{(2.882,4.989)}
\gpdefrectangularnode{gp plot 1}{\pgfpoint{1.320cm}{0.985cm}}{\pgfpoint{7.447cm}{5.631cm}}
\end{tikzpicture}
    \label{fig:offline-mountain-car}
  }
  \subfigure[Pendulum]{
    \begin{tikzpicture}[gnuplot]
\path (0.000,0.000) rectangle (8.000,6.000);
\gpfill{rgb color={1.000,1.000,1.000}} (1.320,0.985)--(7.447,0.985)--(7.447,5.631)--(1.320,5.631)--cycle;
\gpcolor{color=gp lt color border}
\gpsetlinetype{gp lt border}
\gpsetlinewidth{1.00}
\draw[gp path] (1.320,0.985)--(1.320,5.631)--(7.447,5.631)--(7.447,0.985)--cycle;
\gpsetlinewidth{0.50}
\draw[gp path] (1.320,0.985)--(1.571,0.985);
\draw[gp path] (7.447,0.985)--(7.196,0.985);
\gpcolor{rgb color={0.000,0.000,0.000}}
\node[gp node right,font={\fontsize{10pt}{12pt}\selectfont}] at (1.136,0.985) {0};
\gpcolor{color=gp lt color border}
\draw[gp path] (1.320,1.914)--(1.571,1.914);
\draw[gp path] (7.447,1.914)--(7.196,1.914);
\gpcolor{rgb color={0.000,0.000,0.000}}
\node[gp node right,font={\fontsize{10pt}{12pt}\selectfont}] at (1.136,1.914) {20};
\gpcolor{color=gp lt color border}
\draw[gp path] (1.320,2.843)--(1.571,2.843);
\draw[gp path] (7.447,2.843)--(7.196,2.843);
\gpcolor{rgb color={0.000,0.000,0.000}}
\node[gp node right,font={\fontsize{10pt}{12pt}\selectfont}] at (1.136,2.843) {40};
\gpcolor{color=gp lt color border}
\draw[gp path] (1.320,3.773)--(1.571,3.773);
\draw[gp path] (7.447,3.773)--(7.196,3.773);
\gpcolor{rgb color={0.000,0.000,0.000}}
\node[gp node right,font={\fontsize{10pt}{12pt}\selectfont}] at (1.136,3.773) {60};
\gpcolor{color=gp lt color border}
\draw[gp path] (1.320,4.702)--(1.571,4.702);
\draw[gp path] (7.447,4.702)--(7.196,4.702);
\gpcolor{rgb color={0.000,0.000,0.000}}
\node[gp node right,font={\fontsize{10pt}{12pt}\selectfont}] at (1.136,4.702) {80};
\gpcolor{color=gp lt color border}
\draw[gp path] (1.320,5.631)--(1.571,5.631);
\draw[gp path] (7.447,5.631)--(7.196,5.631);
\gpcolor{rgb color={0.000,0.000,0.000}}
\node[gp node right,font={\fontsize{10pt}{12pt}\selectfont}] at (1.136,5.631) {100};
\gpcolor{color=gp lt color border}
\draw[gp path] (1.320,0.985)--(1.320,1.236);
\draw[gp path] (1.320,5.631)--(1.320,5.380);
\gpcolor{rgb color={0.000,0.000,0.000}}
\node[gp node center,font={\fontsize{10pt}{12pt}\selectfont}] at (1.320,0.677) {$10^{0}$};
\gpcolor{color=gp lt color border}
\draw[gp path] (3.362,0.985)--(3.362,1.236);
\draw[gp path] (3.362,5.631)--(3.362,5.380);
\gpcolor{rgb color={0.000,0.000,0.000}}
\node[gp node center,font={\fontsize{10pt}{12pt}\selectfont}] at (3.362,0.677) {$10^{1}$};
\gpcolor{color=gp lt color border}
\draw[gp path] (5.405,0.985)--(5.405,1.236);
\draw[gp path] (5.405,5.631)--(5.405,5.380);
\gpcolor{rgb color={0.000,0.000,0.000}}
\node[gp node center,font={\fontsize{10pt}{12pt}\selectfont}] at (5.405,0.677) {$10^{2}$};
\gpcolor{color=gp lt color border}
\draw[gp path] (7.447,0.985)--(7.447,1.236);
\draw[gp path] (7.447,5.631)--(7.447,5.380);
\gpcolor{rgb color={0.000,0.000,0.000}}
\node[gp node center,font={\fontsize{10pt}{12pt}\selectfont}] at (7.447,0.677) {$10^{3}$};
\gpcolor{color=gp lt color border}
\draw[gp path] (1.320,5.631)--(1.320,0.985)--(7.447,0.985)--(7.447,5.631)--cycle;
\gpcolor{rgb color={0.000,0.000,0.000}}
\node[gp node center,rotate=90,font={\fontsize{10pt}{12pt}\selectfont}] at (0.246,3.308) {value};
\node[gp node center,font={\fontsize{10pt}{12pt}\selectfont}] at (4.383,0.215) {trajectories};
\gpfill{rgb color={1.000,0.000,0.000},opacity=0.10} (1.320,1.398)--(1.320,1.398)--(1.935,1.420)--(2.748,1.572)%
    --(3.362,2.493)--(3.977,3.845)--(4.790,5.242)--(5.405,5.600)--(6.019,5.526)%
    --(6.832,5.591)--(7.447,5.569)--(7.447,5.328)--(6.832,5.399)--(6.019,5.192)%
    --(5.405,5.383)--(4.790,4.712)--(3.977,3.135)--(3.362,1.971)--(2.748,1.410)%
    --(1.935,1.342)--(1.320,1.312)--cycle;
\gpcolor{rgb color={1.000,0.000,0.000}}
\gpsetlinetype{gp lt plot 0}
\gpsetlinewidth{1.00}
\draw[gp path] (1.320,1.398)--(1.935,1.420)--(2.748,1.572)--(3.362,2.493)--(3.977,3.845)%
  --(4.790,5.242)--(5.405,5.600)--(6.019,5.526)--(6.832,5.591)--(7.447,5.569)--(7.447,5.328)%
  --(6.832,5.399)--(6.019,5.192)--(5.405,5.383)--(4.790,4.712)--(3.977,3.135)--(3.362,1.971)%
  --(2.748,1.410)--(1.935,1.342)--(1.320,1.312);
\gpsetlinewidth{0.50}
\draw[gp path] (1.320,1.398)--(1.935,1.420)--(2.748,1.572)--(3.362,2.493)--(3.977,3.845)%
  --(4.790,5.242)--(5.405,5.600)--(6.019,5.526)--(6.832,5.591)--(7.447,5.569)--(7.447,5.328)%
  --(6.832,5.399)--(6.019,5.192)--(5.405,5.383)--(4.790,4.712)--(3.977,3.135)--(3.362,1.971)%
  --(2.748,1.410)--(1.935,1.342)--(1.320,1.312)--cycle;
\gpfill{rgb color={0.000,0.000,1.000},opacity=0.10} (1.320,1.371)--(1.320,1.371)--(1.935,1.374)--(2.748,1.461)%
    --(3.362,1.635)--(3.977,1.885)--(4.790,2.460)--(5.405,2.891)--(6.019,3.129)%
    --(6.832,3.407)--(7.447,3.000)--(7.447,2.308)--(6.832,2.625)--(6.019,2.374)%
    --(5.405,2.223)--(4.790,1.828)--(3.977,1.542)--(3.362,1.435)--(2.748,1.381)%
    --(1.935,1.297)--(1.320,1.289)--cycle;
\gpcolor{rgb color={0.000,0.000,1.000}}
\gpsetlinetype{gp lt plot 2}
\gpsetlinewidth{1.00}
\draw[gp path] (1.320,1.371)--(1.935,1.374)--(2.748,1.461)--(3.362,1.635)--(3.977,1.885)%
  --(4.790,2.460)--(5.405,2.891)--(6.019,3.129)--(6.832,3.407)--(7.447,3.000)--(7.447,2.308)%
  --(6.832,2.625)--(6.019,2.374)--(5.405,2.223)--(4.790,1.828)--(3.977,1.542)--(3.362,1.435)%
  --(2.748,1.381)--(1.935,1.297)--(1.320,1.289);
\gpsetlinetype{gp lt plot 0}
\gpsetlinewidth{0.50}
\draw[gp path] (1.320,1.371)--(1.935,1.374)--(2.748,1.461)--(3.362,1.635)--(3.977,1.885)%
  --(4.790,2.460)--(5.405,2.891)--(6.019,3.129)--(6.832,3.407)--(7.447,3.000)--(7.447,2.308)%
  --(6.832,2.625)--(6.019,2.374)--(5.405,2.223)--(4.790,1.828)--(3.977,1.542)--(3.362,1.435)%
  --(2.748,1.381)--(1.935,1.297)--(1.320,1.289)--cycle;
\gpcolor{color=gp lt color border}
\node[gp node right,font={\fontsize{10pt}{12pt}\selectfont}] at (2.240,5.297) {ABC};
\gpcolor{rgb color={1.000,0.000,0.000}}
\gpsetlinewidth{3.00}
\draw[gp path] (2.424,5.297)--(3.340,5.297);
\draw[gp path] (1.320,1.355)--(1.935,1.380)--(2.748,1.493)--(3.362,2.211)--(3.977,3.484)%
  --(4.790,5.001)--(5.405,5.504)--(6.019,5.370)--(6.832,5.516)--(7.447,5.463);
\gpsetpointsize{8.00}
\gppoint{gp mark 6}{(1.320,1.355)}
\gppoint{gp mark 6}{(1.935,1.380)}
\gppoint{gp mark 6}{(2.748,1.493)}
\gppoint{gp mark 6}{(3.362,2.211)}
\gppoint{gp mark 6}{(3.977,3.484)}
\gppoint{gp mark 6}{(4.790,5.001)}
\gppoint{gp mark 6}{(5.405,5.504)}
\gppoint{gp mark 6}{(6.019,5.370)}
\gppoint{gp mark 6}{(6.832,5.516)}
\gppoint{gp mark 6}{(7.447,5.463)}
\gppoint{gp mark 6}{(2.882,5.297)}
\gpcolor{color=gp lt color border}
\node[gp node right,font={\fontsize{10pt}{12pt}\selectfont}] at (2.240,4.989) {LSPI};
\gpcolor{rgb color={0.000,0.000,1.000}}
\gpsetlinetype{gp lt plot 5}
\draw[gp path] (2.424,4.989)--(3.340,4.989);
\draw[gp path] (1.320,1.330)--(1.935,1.335)--(2.748,1.420)--(3.362,1.520)--(3.977,1.702)%
  --(4.790,2.131)--(5.405,2.560)--(6.019,2.742)--(6.832,2.998)--(7.447,2.641);
\gppoint{gp mark 2}{(1.320,1.330)}
\gppoint{gp mark 2}{(1.935,1.335)}
\gppoint{gp mark 2}{(2.748,1.420)}
\gppoint{gp mark 2}{(3.362,1.520)}
\gppoint{gp mark 2}{(3.977,1.702)}
\gppoint{gp mark 2}{(4.790,2.131)}
\gppoint{gp mark 2}{(5.405,2.560)}
\gppoint{gp mark 2}{(6.019,2.742)}
\gppoint{gp mark 2}{(6.832,2.998)}
\gppoint{gp mark 2}{(7.447,2.641)}
\gppoint{gp mark 2}{(2.882,4.989)}
\gpdefrectangularnode{gp plot 1}{\pgfpoint{1.320cm}{0.985cm}}{\pgfpoint{7.447cm}{5.631cm}}
\end{tikzpicture}
    \label{fig:offline-pendulum}
  }
  \caption{\textbf{Off-line performance.} For $\nsamples= 10^3$, $\threshold = 10^{-2}$, $\ntrajectories=10^2$, $\nrollouts=2\cdot 10^3$, $\discount = 0.99$. The data are averaged over $10^2$ runs, with each run being evaluated with $10^3$ trajectories.  The shaded regions show $95\%$ bootstrap confidence intervals from $10^3$ bootstrap samples.} 
  \label{fig:offline}
\end{figure}

We compared the offline performance of LSPI and ABC-LSPI on the two domains. We first observe $\ntrain$ trajectories in the real environment drawn using a uniformly random policy. These trajectories are used by both ABC-LSPI and LSPI to estimate a policy. This policy is then evaluated over $10^3$ trajectories. The experiment was repeated for $10^2$ runs. Since LSPI requires a basis, in both cases we employed a uniform $4 \times 4$ grid of RBFs, as well as an additional unit basis for the value function estimation.

The results of the experiment are shown in Fig.~\ref{fig:offline}, where we plot the expected utility (with a discount factor $\gamma=0.99$) of the policy found as the number of trajectories increase. Both LSPI and ABC-LSPI  manage to find an improved policy with more data. However, the source of their improvement is different. In the case of LSPI, the additional data leads to better estimation of the value function. In ABC-LSPI, the additional data leads to a better sampled model. The value function is then estimated using a large number of rollouts in the sampled model. The CPU time taken by ABC ranges in 20 to 40s, versus 0.05 to 30s for pure LSPI, depending on the amount of training data. This is due to the additional overhead of sampling as well as the increased amount of rollouts used for ADP.

In general, the ABC approach quickly reaches a good performance, but then has little improvement. This effect is particularly prominent in the Mountain Car domain (Fig.~\ref{fig:offline-mountain-car}), where it is significantly worse asymptotically than LSPI. This can be attributed to the fact that even though more data is available, the number of samples drawn from the prior is not sufficient for a good model to be found. In fact, upon investigation we noticed that although most model parameters were reliably estimated, there was a difficulty in estimating the goal location from the given trajectories. This was probably the main reason why ABC didn't reach optimal performance in this case. However, it may be possible to improve upon this result with a more efficient sampling scheme, or a statistic that is closer to sufficiency than the simple utility-based statistic we used.

On the other hand, the performance is significantly better than LSPI in the pendulum environment (Fig.~\ref{fig:offline-pendulum}). There are two possible reason for this. Firstly,  ABC-LSPI not only uses more samples for the value function estimation, but also better distributed samples, as it estimates the value function by drawing trajectories starting from uniformly drawn states in the sampled environment. Secondly, and perhaps more importantly, that even for very differently parametrised pendulum problems the optimal policies on the pendulum domain are quite similar. Thus, even if ABC only samples a very approximate simulator, its optimal policy is going to be close to that of the real environment.

\section{Conclusion}
\label{sec:conclusion}
We presented an extension of ABC, a likelihood-free method for approximate Bayesian computation, to controlled dynamical systems. This method is particularly interesting for domains where it is difficult to specify an appropriate probabilistic model, and where computation is significantly cheaper than data collection. It is in principle generally applicable to any type of reinforcement learning problem, including continuous, partially observable and multi-agent domains. We also introduce a general theorem for the quality of the approximate ABC posterior distribution, which can be used for further analysis of ABC methods.

We then applied ABC inference to reinforcement learning. This involves using simulation both to estimate approximate posterior distributions and to find good policies. Thus, ABC-RL can be simultaneously seen as an extension of ABC inference to control problems and an extension of approximate dynamic programming methods to likelihood-free approximate Bayesian inference. The main advantage is when have no reasonable probabilistic model, but we do have access to a parametrised set of simulators, which contain good approximations to the real environment. This is frequently the case in complex control problems. However, we see that ABC-RL (specifically ABC-LSPI) is competitive with pure LSPI even in problems with low dimensionality where LSPI is expected to perform quite well. 

ABC-RL appears a viable approach, even with a very simple sampling scheme, and a utility-based statistic. In future work, we would like to investigate more elaborate ABC schemes such as Markov chain Monte Carlo, as well as statistics that are closer to sufficient, such as discounted feature expectations and conditional utilities. This would enable us to examine its performance in more complex problems where the practical advantages of ABC would be more evident. However, we believe that the results are extremely encouraging and that the ABC methodology has great potential in the field of reinforcement learning.

\appendix
\section{Collected proofs}
\begin{proof}[Proof of Remark~\ref{rem:policy-dependence}]
  Let $h = (\seq{x}{T+1}, \seq{a}{T}, \seq{r}{T})$. Using induction,
  \begin{align*}
    \Pmp(h)
    =
    \prod_{t=0}^T 
    \mdp_t(x_{t+1})
    \pol_t(a_{t})
    .
  \end{align*}
  Replacing in the posterior calculation \eqref{eq:posterior} we obtain:
  \begin{align}
    \bel(B \mid h, \pol) 
    &=
    \frac{\int_B \prod_{t=0}^T \mdp_t(x_{t+1}) \dd{\bel}(\mdp)}
    {\int_\MDPs \prod_{t=0}^T \mdp_t(x_{t+1}) \dd{\bel}(\mdp)}
    \label{eq:posterior}
  \end{align}
  since the
  $\prod_{t=0}^T \pol_t(a_{t})$
  terms can be taken out of the integrals and cancel out.
\end{proof}
\begin{proof}[Proof of Corollary~\ref{cor:statistic}]
  By definition, a sufficient statistic $f : \CH \to \CW$ has the following property:
  \begin{equation}
    \forall \mdp,\pol: ~ \Pmp(h) = \Pmp(h')
    \qquad
    \textrm{iff $f(h) = f(h')$}.
    \label{eq:sufficient-statistic}
  \end{equation}
  The probability of drawing a model in $B \subset \MDPs$ is:
  \begin{align}
    &\frac{\int_B \sum_{z \in \CH} \ind{f(z) = f(\hist)} \Pmp(z) \dd\bel(\mdp)}
    {\int_\MDPs \sum_{z \in \CH} \ind{f(z) = f(\hist)} \Pmp(z) \dd\bel(\mdp)}\nonumber
    \\
    =
    &\frac{\int_B \Pmp(\hist) \dd\bel(\mdp)}
    {\int_\MDPs \Pmp(\hist) \dd\bel(\mdp)}
    =
    \bel(B \mid \hist, \pol),
    \label{eq:posterior-2}
\end{align}
due to \eqref{eq:sufficient-statistic}.
\end{proof}

\begin{proof}[Proof of Theorem~\ref{the:posterior-divergence}]
For notational simplicity, we  introduce $\marg(\cdot) = \int_\MDPs \Pmp(\cdot) \dd\bel(\mdp)$ for the marginal prior measure on $\CH$, also omitting the dependency on $\pol$.
Then the ABC posterior $\bel_\epsilon(B\mid\hist)$ equals:
  \begin{align}
    &\frac{\int_B \sum_{z \in \CH} \ind{\|f(z) - f(\hist)\|<\epsilon} \Pmp(z) \dd\bel(\mdp)}
    {\int_\MDPs \sum_{z \in \CH} \ind{\|f(z) - f(\hist)\| < \epsilon} \Pmp(z) \dd\bel(\mdp)}\nonumber
    \\
    =
    &\frac{\int_B \Pmp(\Ae) \dd\bel(\mdp)}
    {\int_\MDPs \Pmp(\Ae) \dd\bel(\mdp)}
    =\frac{\int_B \Pmp(\Ae) \dd\bel(\mdp)}
    {\marg(\Ae)}.
    \label{eq:posterior-3}
\end{align}
From Definition~\ref{def:divergence}:
  \begin{align*}
    &\dist{\bel(\cdot \mid \hist)}{\bel_\epsilon(\cdot \mid \hist)} 
    = 
    \int_{B}
    \ln \frac{\dd{\bel}(\mdp \mid \hist)}{\dd{\bel_\epsilon}(\mdp \mid h)} \dd{\bel}(\mdp \mid \hist)
    \\
    &\overset{(a)}{=}
    \int_{\MDPs}
    \ln \left(
      \frac{\Pmp(\hist)}{\Pmp(\Ae)}
      \times
      \frac{\marg(\Ae)}{\marg(\hist)}
    \right)
    \dd{\bel}(\mdp \mid \hist)
    \\
    &= 
    \int_{\MDPs}
    \left(
    \ln 
    \frac{\Pmp(\hist)}{\Pmp(\Ae)}
    \dd{\bel}(\mdp \mid \hist)
    +
    \ln 
    \frac{\marg(\Ae)}{\marg(\hist)}
    \right)
    \dd{\bel}(\mdp \mid \hist)
    \\
    &\overset{(b)}{\leq}
    \int_{\MDPs}
    \left(
    \ln 
      \frac{\Pmp(\hist)}{\min_{z \in \Ae} \Pmp(z)}
      +
      \ln 
      \frac{\marg(\Ae)}{\marg(\hist)}
    \right)
    \dd{\bel}(\mdp \mid \hist)
    \\
    &\overset{(c)}{\leq}
    \int_{\MDPs}
    \left(
    \left|
      \ln 
      \frac{\Pmp(\hist)}{\min_{z \in \Ae} \Pmp(z)}
    \right|
    +
    \left|
    \ln 
    \frac{\marg(\Ae)}{\marg(\hist)}
    \right|
    \right)
    \dd{\bel}(\mdp \mid \hist)
    \\
    &
    \overset{(d)}{\leq}
    L\epsilon
    +
    \left|\ln \frac{\marg(\Ae)}{\marg(\hist)}\right|
    \overset{(e)}{\leq}
    2L\epsilon + \ln|\Ae|.
  \end{align*}
  Equality (a) follows from  equations \eqref{eq:posterior-2} and \eqref{eq:posterior-3}. Inequality (b) follows from the fact that $\Pmp(\Ae) = \sum_{z \in \Ae} \Pmp(z) \geq  \min_{z \in \Ae} \Pmp(z)$, while (c) follows from $|x| \geq x$. For (d), first note that for any $z \in \Ae$, by the definition of $\Ae$, $|\ln[\Pmp(\hist)/\Pmp(z)]| \leq L \epsilon$, by Assumption~\ref{ass:lipschitz}, which can be taken out of the integral.
 The second $|\cdot|$ term in the integral is independent of $\mdp$ and so is also taken out. We can then bound the integral using $\int_\MDPs \bel(\mdp \mid \hist) = \bel(\MDPs \mid \hist) = 1$.
For (e), Assumption~\ref{ass:lipschitz} gives that
$\marg(z) = \int_\MDPs \Pmp(z) \dd\bel(\mdp) \leq \exp(L \epsilon) \marg(\hist)$ for any $z \in \Ae$ so, $\ln [\marg(\Ae)/\marg(\hist)] \leq L \epsilon + \ln |\Ae|$. Finally, as $\hist \in \Ae$,  $\marg(\Ae)\geq\marg(\hist)$ by additivity of measures, so the $|\cdot|$ can be removed, thus obtaining the final result.

\end{proof}

\bibliographystyle{icml2013}
\bibliography{../../bib/misc,../../bib/mine,../../bib/my_citations}

\begin{thebibliography}{33}
\providecommand{\natexlab}[1]{#1}
\providecommand{\url}[1]{\texttt{#1}}
\expandafter\ifx\csname urlstyle\endcsname\relax
  \providecommand{\doi}[1]{doi: #1}\else
  \providecommand{\doi}{doi: \begingroup \urlstyle{rm}\Url}\fi

\bibitem[Agrawal \& Goyal(2012)Agrawal and Goyal]{agrawal:thompson}
Agrawal, Shipra and Goyal, Navi.
\newblock Analysis of thompson sampling for the multi-armed bandit problem.
\newblock In \emph{{COLT 2012}}, 2012.

\bibitem[Araya et~al.(2012)Araya, Thomas, Buffet, et~al.]{araya2012near}
Araya, M., Thomas, V., Buffet, O., et~al.
\newblock Near-optimal {BRL} using optimistic local transitions.
\newblock In \emph{ICML}, 2012.

\bibitem[Bertsekas(1999)]{Bertsekas:NLP}
Bertsekas, Dimitri~P.
\newblock \emph{Nonlinear Programming}.
\newblock Athena Scientific, 1999.

\bibitem[Bertsekas(2006)]{Bertsekas:Rollout:2005}
Bertsekas, Dimitri~P.
\newblock Rollout algorithms for constrained dynamic programming.
\newblock Technical Report {LIDS} 2646, Dept. of Electrical Engineering and
  Computer Science, M.I.T., Cambridge, Mass., 2006.

\bibitem[Bertsekas \& Tsitsiklis(1996)Bertsekas and
  Tsitsiklis]{BertsekasTsitsiklis:NDP}
Bertsekas, Dimitri~P. and Tsitsiklis, John~N.
\newblock \emph{Neuro-Dynamic Programming}.
\newblock Athena Scientific, 1996.

\bibitem[Castro \& Precup(2007)Castro and Precup]{DBLP:conf/ijcai/CastroP07}
Castro, Pablo~Samuel and Precup, Doina.
\newblock Using linear programming for {Bayesian} exploration in {Markov}
  decision processes.
\newblock In Veloso, Manuela~M. (ed.), \emph{IJCAI}, pp.\  2437--2442, 2007.

\bibitem[Csill{\'e}ry et~al.(2010)Csill{\'e}ry, Blum, Gaggiotti,
  Fran{\c{c}}ois, et~al.]{csillery2010ABC}
Csill{\'e}ry, K., Blum, M.G.B., Gaggiotti, O.E., Fran{\c{c}}ois, O., et~al.
\newblock Approximate {Bayesian} computation ({ABC}) in practice.
\newblock \emph{Trends in ecology \& evolution}, 25\penalty0 (7):\penalty0
  410--418, 2010.

\bibitem[Dean \& Singh(2011)Dean and Singh]{dean2011asymptotic}
Dean, Thomas~A and Singh, Sumeetpal~S.
\newblock Asymptotic behaviour of approximate bayesian estimators.
\newblock \emph{arXiv preprint arXiv:1105.3655}, 2011.

\bibitem[De{G}root(1970)]{Degroot:OptimalStatisticalDecisions}
De{G}root, Morris~H.
\newblock \emph{Optimal Statistical Decisions}.
\newblock John Wiley \& Sons, 1970.

\bibitem[Dimitrakakis(2011)]{dimitrakakis:mmbi:ewrl:2011}
Dimitrakakis, Christos.
\newblock Robust bayesian reinforcement learning through tight lower bounds.
\newblock In \emph{European Workshop on Reinforcement Learning (EWRL 2011)},
  number 7188 in LNCS, pp.\  177--188, 2011.

\bibitem[Dimitrakakis \& Lagoudakis(2008)Dimitrakakis and
  Lagoudakis]{dimitrakakis+lagoudakis:mlj2008}
Dimitrakakis, Christos and Lagoudakis, Michail~G.
\newblock Rollout sampling approximate policy iteration.
\newblock \emph{Machine Learning}, 72\penalty0 (3):\penalty0 157--171,
  September 2008.
\newblock \doi{10.1007/s10994-008-5069-3}.
\newblock Presented at ECML'08.

\bibitem[Duff(2002)]{duff2002olc}
Duff, Michael~O'Gordon.
\newblock \emph{Optimal Learning Computational Procedures for {Bayes}-adaptive
  {Markov} Decision Processes}.
\newblock PhD thesis, University of Massachusetts at Amherst, 2002.

\bibitem[Dwork \& Lei(2009)Dwork and Lei]{dwork2009differential}
Dwork, Cynthia and Lei, Jing.
\newblock Differential privacy and robust statistics.
\newblock In \emph{Proceedings of the 41st annual ACM symposium on Theory of
  computing}, pp.\  371--380. ACM, 2009.

\bibitem[Ernst et~al.(2005)Ernst, Geurts, and Wehenkel]{Ernst:TreeRL}
Ernst, Damien, Geurts, Pierre, and Wehenkel, Louis.
\newblock Tree-based batch mode reinforcement learning.
\newblock \emph{Journal of Machine Learning Research}, 6:\penalty0 503--556,
  2005.

\bibitem[Gabillon et~al.(2011)Gabillon, Lazaric, Ghavamzadeh, and
  Scherrer]{gabillon:icml2011}
Gabillon, Victor, Lazaric, Alessandro, Ghavamzadeh, Mohammad, and Scherrer,
  Bruno.
\newblock Classification-based policy iteration with a critic.
\newblock In \emph{{ICML 2011}}, 2011.

\bibitem[Geweke(1999)]{geweke1999using}
Geweke, J.
\newblock Using simulation methods for {Bayesian} econometric models:
  inference, development, and communication.
\newblock \emph{Econometric Reviews}, 18\penalty0 (1):\penalty0 1--73, 1999.

\bibitem[Hoeffding(1963)]{Hoeffding:SumInequalities}
Hoeffding, Wassily.
\newblock Probability inequalities for sums of bounded random variables.
\newblock \emph{Journal of the American Statistical Association}, 58\penalty0
  (301):\penalty0 13--30, March 1963.

\bibitem[Jasra et~al.(2010)Jasra, Singh, Martin, and McCoy]{jasra2010filtering}
Jasra, Ajay, Singh, Sumeetpal~S, Martin, James~S, and McCoy, Emma.
\newblock Filtering via approximate bayesian computation.
\newblock \emph{Stat. Comput}, 2010.

\bibitem[Kaufmanna et~al.(2012)Kaufmanna, Korda, and Munos]{Kaufmann:Thompson}
Kaufmanna, Emilie, Korda, Nathaniel, and Munos, R\'{e}mi.
\newblock Thompson sampling: An optimal finite time analysis.
\newblock In \emph{{ALT-2012}}, 2012.

\bibitem[Kolter \& Ng(2009)Kolter and Ng]{Kolter-Ng:NearBayesianExploration}
Kolter, J.~Zico and Ng, Andrew~Y.
\newblock Near-{Bayesian} exploration in polynomial time.
\newblock In \emph{ICML 2009}, 2009.

\bibitem[Lagoudakis \& Parr(2003{\natexlab{a}})Lagoudakis and
  Parr]{lagoudakis2003rcpi}
Lagoudakis, M. and Parr, R.
\newblock Reinforcement learning as classification: Leveraging modern
  classifiers.
\newblock In \emph{ICML}, pp.\  424, 2003{\natexlab{a}}.

\bibitem[Lagoudakis \& Parr(2003{\natexlab{b}})Lagoudakis and
  Parr]{lagoudakis2003least}
Lagoudakis, M.G. and Parr, R.
\newblock Least-squares policy iteration.
\newblock \emph{The Journal of Machine Learning Research}, 4:\penalty0
  1107--1149, 2003{\natexlab{b}}.

\bibitem[Marin et~al.(2011)Marin, Pudlo, Robert, and Ryder]{marin2011ABC}
Marin, J.M., Pudlo, P., Robert, C.P., and Ryder, R.J.
\newblock Approximate {Bayesian} computational methods.
\newblock \emph{Statistics and Computing}, pp.\  1--14, 2011.

\bibitem[Poupart et~al.(2006)Poupart, Vlassis, Hoey, and Regan]{poupart2006asd}
Poupart, P., Vlassis, N., Hoey, J., and Regan, K.
\newblock {An analytic solution to discrete {Bayesian} reinforcement learning}.
\newblock In \emph{ICML 2006}, pp.\  697--704. ACM Press New York, NY, USA,
  2006.

\bibitem[Poupart \& Vlassis(2008)Poupart and
  Vlassis]{Poupart:ModelBayesBAPOMDP}
Poupart, Pascal and Vlassis, Nikos.
\newblock Model-based {Bayesian} reinforcement learning in partially observable
  domains.
\newblock In \emph{International Symposium on Artificial Intelligence and
  Mathematics ({ISAIM})}, 2008.

\bibitem[Ross et~al.(2008)Ross, Chaib-draa, and Pineau]{NIPS2007:ross:bapomdp}
Ross, Stephane, Chaib-draa, Brahim, and Pineau, Joelle.
\newblock Bayes-adaptive {POMDP}s.
\newblock In Platt, J.C., Koller, D., Singer, Y., and Roweis, S. (eds.),
  \emph{Advances in Neural Information Processing Systems 20}, Cambridge, MA,
  2008. MIT Press.

\bibitem[Savage(1972)]{savage1972fs}
Savage, Leonard~J.
\newblock \emph{{The Foundations of Statistics}}.
\newblock Dover Publications, 1972.

\bibitem[Strens(2000)]{strens2000bayesian}
Strens, Malcolm.
\newblock A {Bayesian} framework for reinforcement learning.
\newblock In \emph{{ICML 2000}}, pp.\  943--950, 2000.

\bibitem[Sutton \& Barto(1998)Sutton and Barto]{Sutton+Barto:1998}
Sutton, Richard~S. and Barto, Andrew~G.
\newblock \emph{Reinforcement Learning: An Introduction}.
\newblock MIT Press, 1998.

\bibitem[Thompson(1933)]{thompson1933lou}
Thompson, W.R.
\newblock {On the Likelihood that One Unknown Probability Exceeds Another in
  View of the Evidence of two Samples}.
\newblock \emph{Biometrika}, 25\penalty0 (3-4):\penalty0 285--294, 1933.

\bibitem[Toni et~al.(2009)Toni, Welch, Strelkowa, Ipsen, and
  Stumpf]{toni2009ABC}
Toni, T., Welch, D., Strelkowa, N., Ipsen, A., and Stumpf, M.P.H.
\newblock Approximate {Bayesian} computation scheme for parameter inference and
  model selection in dynamical systems.
\newblock \emph{Journal of the Royal Society Interface}, 6\penalty0
  (31):\penalty0 187--202, 2009.

\bibitem[Vlassis et~al.(2012)Vlassis, Ghavamzadeh, Mannor, and
  Poupart]{vlassis2012bayesian}
Vlassis, N., Ghavamzadeh, M., Mannor, S., and Poupart, P.
\newblock \emph{Reinforcement Learning}, chapter Bayesian Reinforcement
  Learning, pp.\  359--386.
\newblock Springer, 2012.

\bibitem[Wu et~al.(2010)Wu, Zilberstein, and
  Chen]{Wu-Zilberstein-RolloutSampling-DecPOMDP:UAI-2010}
Wu, Feng, Zilberstein, Shlomo, and Chen, Xiaoping.
\newblock Rollout sampling policy iteration for decentralized {POMDPs}.
\newblock In \emph{The 26th conference on Uncertainty in Artificial
  Intelligence (UAI 2010)}, Catalina Island, CA, USA, July 2010.

\end{thebibliography}

\end{document}